\documentclass{article}

\usepackage{microtype}
\usepackage{graphicx}
\usepackage{subcaption}
\usepackage{booktabs}
\usepackage{hyperref}

\usepackage[accepted]{icml2026_fogen}

\usepackage{amsmath}
\usepackage{amssymb}
\usepackage{mathtools}
\usepackage{amsthm}
\usepackage{bbm}

\usepackage[capitalize,noabbrev]{cleveref}

\theoremstyle{plain}
\newtheorem{theorem}{Theorem}[section]

\newtheorem{corollary}[theorem]{Corollary}
\theoremstyle{definition}
\newtheorem{definition}[theorem]{Definition}

\theoremstyle{remark}
\newtheorem{remark}[theorem]{Remark}

\DeclareMathOperator{\Var}{Var}
\DeclareMathOperator{\Cov}{Cov}
\DeclareMathOperator{\Bias}{Bias}

\newcommand{\E}{\mathbb{E}}
\newcommand{\R}{\mathbb{R}}
\newcommand{\Prob}{\mathbb{P}}

\icmltitlerunning{Policy Gradient Foundations of GRPO: Credit Assignment, Gradient Sparsity, and Rank Collapse}

\begin{document}

\twocolumn[
\icmltitle{On the Policy Gradient Foundations of Group Relative Policy Optimization: \\
Credit Assignment, Gradient Sparsity, and Rank Collapse}

\icmlsetsymbol{equal}{*}

\begin{icmlauthorlist}
\icmlauthor{Amritansh Mishra}{aff1}
\icmlauthor{Supriyo Chakraborty}{aff1}
\icmlauthor{Berkcan Kapusuzoglu}{aff1}
\end{icmlauthorlist}

\icmlaffiliation{aff1}{Capital One}

\icmlcorrespondingauthor{Amritansh Mishra}{amritansh.mishra@capitalone.com}

\icmlkeywords{GRPO, Policy Gradient, RLHF, Credit Assignment, Gradient Sparsity}

\vskip 0.3in
]

\printAffiliationsAndNotice{}

\begin{abstract}
Group Relative Policy Optimization (GRPO) eliminates the learned critic in PPO by using the mean reward of grouped rollouts as a baseline. We provide a rigorous derivation of GRPO from first principles of the policy gradient theorem, revealing a fundamental credit assignment failure: under output-only reward, every token in a rollout receives identical advantage, collapsing token-level credit to a single scalar. We prove this induces gradient sparsity that intensifies over training, and demonstrate empirically via SVD analysis of GRPO gradients on Nemotron-4B/GSM8K that the gradient matrix has effective rank $\approx 2$ regardless of group size $R \in \{2,4,8\}$. We formalize this as an intrinsic rank-2 structure arising from the zero-sum constraint on advantages and derive conditions under which GRPO's baseline is optimal. Our results characterize when GRPO's simplicity is theoretically justified and identify the credit assignment bottleneck as the key limitation for multi-step reasoning.
\end{abstract}

\section{Introduction}
\label{sec:intro}

Policy gradient methods have become central to post-training of large language models, both for aligning with human preferences \cite{ouyang2022training} and for improving reasoning capabilities \cite{deepseekr1,shao2024deepseekmath}. The standard approach uses Proximal Policy Optimization (PPO) \cite{schulman2017proximal}, which requires a learned value function (critic) to estimate advantages doubling memory and compute footprint.

Group Relative Policy Optimization (GRPO) \cite{shao2024deepseekmath} offers an elegant alternative for each prompt, sample $R$ completions, compute their rewards, and use the group mean as the baseline. This eliminates the critic entirely. GRPO has been successfully deployed in DeepSeek-R1 \cite{deepseekr1} and other systems, but its theoretical properties-particularly around \emph{credit assignment} and its links to Policy Gradient Methods remain poorly understood.

In this paper, we show that the GRPO gradient is fundamentally a policy gradient estimator, and derive it step by step from the policy gradient theorem. By making explicit the assumptions under which GRPO is formulated---output-only reward and a group-mean baseline---we reveal a structural consequence: \textbf{GRPO assigns identical credit to every token in a sequence}. This uniform credit assignment is not merely a simplification but a fundamental property that shapes the gradient's spectral characteristics.

\paragraph{Contributions.}
\begin{enumerate}
    \item We derive GRPO as a special case of the REINFORCE estimator with a group-mean baseline, making all approximations explicit (\cref{sec:derivation}).
    \item We prove that output-only reward induces a \emph{uniform credit assignment} where every token receives the same advantage, and characterize the resulting gradient sparsity (\cref{sec:credit}).
    \item We empirically demonstrate that the GRPO gradient matrix collapses to an effective rank $\approx 2$ independent of group size $R$, and formalize how the zero-sum constraint drives this behavior.
    \item We validate our theory empirically: SVD analysis of GRPO gradients on Nemotron-4B/GSM8K confirms effective rank $\approx 2$ for $R \in \{2,4,8\}$ (\cref{sec:experiments}).
    \item We also explain multi turn limitations of GRPO.
\end{enumerate}

\paragraph{Scope.} We focus on the policy gradient core of GRPO the advantage estimator and gradient structure deliberately setting aside clipping and KL regularization, which affect the optimization step but not the gradient signal's fundamental properties.

\section{From Policy Gradients to GRPO}
\label{sec:derivation}

We derive GRPO from first principles, following the chain: value functions $\to$ policy gradient theorem $\to$ REINFORCE with baseline $\to$ advantage estimation $\to$ GRPO.

\subsection{MDP Formulation for Text Generation}

We model autoregressive generation as an episodic MDP $\mathcal{M} = (\mathcal{S}, \mathcal{A}, P, r, \rho_0)$. A state $s_t$ is the concatenation of the prompt and tokens generated so far. An action $a_t \in \mathcal{A}$ (the vocabulary) appends a token. The transition $P$ is deterministic (concatenation). The initial distribution $\rho_0$ is the prompt distribution.

\begin{definition}[Value Functions]
For policy $\pi_\theta$ with parameters $\theta \in \R^d$:
\begin{align}
v^\pi(s) &= \E_\pi[G_t \mid S_t = s], \quad G_t = \sum_{k=0}^{T-t-1} \gamma^k r_{t+k+1}, \label{eq:value}\\
q^\pi(s,a) &= \E_\pi[G_t \mid S_t = s, A_t = a], \label{eq:qvalue}\\
A^\pi(s,a) &= q^\pi(s,a) - v^\pi(s). \label{eq:advantage}
\end{align}
\end{definition}

\subsection{Policy Gradient Theorem}

The objective $J(\theta) = \E_{\tau \sim \pi_\theta}[G_0]$ has gradient \cite{sutton1999policy}:
\begin{equation}
\nabla_\theta J(\theta) = \E_{\pi_\theta}\left[\sum_{t=0}^{T-1} \gamma^t q^{\pi_\theta}(S_t, A_t) \nabla_\theta \log \pi_\theta(A_t \mid S_t)\right].
\label{eq:pg_theorem}
\end{equation}

\subsection{The REINFORCE Estimator}

The REINFORCE algorithm \cite{williams1992simple} estimates the policy gradient by replacing the expected return with a single Monte Carlo sample. Given a trajectory $\tau = (s_0, a_0, r_1, \ldots, s_{T-1}, a_{T-1}, r_T)$, the estimator is:
\begin{equation}
\hat{g}_{\text{RF}} = \sum_{t=0}^{T-1}\gamma^t \hat{G}_t \nabla_\theta \log \pi_\theta(A_t \mid S_t),
\label{eq:reinforce_raw}
\end{equation}
where $\hat{G}_t = \sum_{k=0}^{T-t-1}\gamma^k r_{t+k+1}$ is the empirical return from time $t$. The key insight is that $\hat{G}_t$ is a Monte Carlo estimate of $q^\pi(S_t, A_t)$: by definition, $q^\pi(s,a) = \E_\pi[G_t \mid S_t = s, A_t = a]$, so a single rollout from state $s_t$ after taking action $a_t$ gives an unbiased (but high-variance) sample of the action-value function. This substitution replacing $q^\pi$ with the realized return from the trajectory is what makes REINFORCE a practical algorithm, at the cost of high variance.

\subsection{Variance Reduction via Baselines}

To reduce variance, we can subtract any state-dependent baseline $b(S_t)$ from the return. This does not introduce bias because:
\begin{equation}
\E_{\pi_\theta}\left[b(S_t) \nabla_\theta \log \pi_\theta(A_t \mid S_t)\right] = b(S_t)\sum_{a} \nabla_\theta \pi_\theta(a \mid S_t) = b(S_t) \cdot \nabla_\theta 1 = 0.
\label{eq:baseline_zero}
\end{equation}
Therefore we can write:
\begin{equation}
\hat{g}_{\text{RF}} = \sum_{t=0}^{T-1}\gamma^t (\hat{G}_t - b(S_t)) \nabla_\theta \log \pi_\theta(A_t \mid S_t).
\label{eq:reinforce}
\end{equation}
The variance-minimizing choice is $b^*(s) = v^\pi(s)$. To see why this yields the advantage, note that the return $\hat{G}_t$ can be decomposed via the Bellman equation. In the one-step (TD) form:
\begin{equation}
\hat{G}_t = r_{t+1} + \gamma \hat{G}_{t+1} = r_{t+1} + \gamma v^\pi(S_{t+1}).
\label{eq:td_decomp}
\end{equation}
Subtracting the baseline $b(S_t) = v^\pi(S_t)$:
\begin{equation}
\hat{G}_t - v^\pi(S_t) \approx r_{t+1} + \gamma v^\pi(S_{t+1}) - v^\pi(S_t) = \delta_t,
\label{eq:td_error}
\end{equation}
which is the temporal difference (TD) error. In expectation, this equals the advantage: $\E[\hat{G}_t - v^\pi(S_t) \mid S_t = s, A_t = a] = q^\pi(s,a) - v^\pi(s) = A^\pi(s,a)$. Thus the policy gradient becomes:
\begin{equation}
\nabla_\theta J(\theta) = \E\left[\sum_{t=0}^{T-1} \gamma^t A^\pi(S_t, A_t) \nabla_\theta \log \pi_\theta(A_t \mid S_t)\right].
\label{eq:pg_advantage}
\end{equation}
This is the advantage actor-critic gradient. The challenge is that computing $v^\pi(s)$ exactly requires a learned critic which is precisely what GRPO eliminates.

\subsection{Deriving GRPO}

GRPO arises from two specializations of the advantage-based policy gradient (\cref{eq:pg_advantage}).

\textbf{Specialization 1: Output-only reward (ORM).} In LLM training with an outcome reward model, reward is given only at the terminal state: $r(s_t, a_t) = 0$ for all $t < T-1$, and $r(s_{T-1}, a_{T-1}) = R_{\text{terminal}}$. Setting $\gamma = 1$, consider what happens to the advantage at each token position.

The return at any time step $t$ is:
\begin{equation}
\hat{G}_t = \sum_{k=t}^{T-1} r(s_k, a_k) = R_{\text{terminal}} \quad \forall\, t \in \{0, \ldots, T-1\},
\label{eq:output_only_return}
\end{equation}
since all intermediate rewards are zero. Now consider the value function: since the only reward comes at the end and $\gamma = 1$, the value of \emph{any} intermediate state $s_t$ along a trajectory is the expected terminal reward conditioned on being in that state. But crucially, once we condition on the full trajectory (as in REINFORCE), the realized return is the same $R_{\text{terminal}}$ regardless of which token position $t$ we evaluate.

This means the TD error at every position becomes:
\begin{equation}
\delta_t = r_{t+1} + v^\pi(s_{t+1}) - v^\pi(s_t) = 0 + v^\pi(s_{t+1}) - v^\pi(s_t) \quad
\end{equation}
and the full Monte Carlo advantage $\hat{G}_t - v^\pi(s_t) = R_{\text{terminal}} - v^\pi(s_t)$. Since all states share the same initial prompt $s_0$ and the value function under output-only reward satisfies $v^\pi(s_0) = \E_\pi[R_{\text{terminal}} \mid s_0]$, the advantage estimate for the entire trajectory collapses to a single scalar:
\begin{equation}
A_t = R_{\text{terminal}} - v^\pi(s_0) \quad \forall\, t.
\label{eq:uniform_advantage_derived}
\end{equation}
\textbf{Every token receives the same advantage.} The first token, the critical reasoning step, and filler tokens all get identical credit.

\textbf{Specialization 2: Group-mean baseline.} The remaining challenge is estimating $v^\pi(s_0)$. Instead of learning a critic, GRPO estimates it by sampling $R$ rollouts from the same prompt and computing the sample mean:
\begin{equation}
\hat{v}_R(s_0) = \frac{1}{R}\sum_{i=1}^R r_i,
\label{eq:grpo_baseline}
\end{equation}
where $r_i$ is the terminal reward of rollout $i$. This is a valid Monte Carlo estimate of $v^\pi(s_0) = \E_\pi[R_{\text{terminal}} \mid s_0]$.

Substituting both specializations into \cref{eq:reinforce}, the GRPO gradient for a single prompt is:
\begin{equation}
\boxed{\hat{g}_{\text{GRPO}} = \frac{1}{R}\sum_{i=1}^R \underbrace{(r_i - \hat{v}_R)}_{\hat{A}_i} \sum_{t=0}^{T_i-1} \nabla_\theta \log \pi_\theta(a_t^{(i)} \mid s_t^{(i)}).}
\label{eq:grpo_gradient} 
\end{equation}

This derivation makes explicit that GRPO is REINFORCE with a specific baseline choice, operating under the output-only reward assumption. The key consequence is immediate: $\hat{A}_i$ does not depend on $t$.

\textbf{Note:} The full GRPO objective also includes importance sampling clipping (analogous to PPO) and a KL divergence penalty against a reference policy. We omit these terms here as they affect the optimization step size but not the fundamental structure of the gradient signal---the credit assignment and rank properties we analyze depend only on the advantage estimator.

\section{Credit Assignment Failure}
\label{sec:credit}

We now formalize the credit assignment implications of \cref{eq:grpo_gradient}. Recall that in GRPO, we sample $R$ rollouts from a single prompt $s_0$ and compute one shared baseline $\hat{v}_R(s_0) = \frac{1}{R}\sum_{i=1}^R r_i$. The advantage for rollout $i$ is then $\hat{A}_i = r_i - \hat{v}_R(s_0)$---a scalar that measures how much better (or worse) rollout $i$ performed relative to the group. Note that $\hat{A}_i \neq 0$ in general; it is positive for above-average rollouts and negative for below-average ones.

The critical observation is what happens \emph{within} each rollout:

\begin{theorem}[Uniform Token-Level Credit]
\label{thm:uniform_credit}
Under output-only reward with $\gamma = 1$, the advantage assigned to every token position $t$ within rollout $i$ is the same scalar:
\begin{equation}
\hat{A}_i(t) = r_i - \hat{v}_R(s_0) = \hat{A}_i \quad \forall\, t \in \{0, \ldots, T_i - 1\}.
\end{equation}
That is, GRPO assigns a single rollout-level credit signal uniformly across all token positions. The per-token gradient contribution is:
\begin{equation}
\hat{g}_i(t) = \hat{A}_i \cdot \nabla_\theta \log \pi_\theta(a_t^{(i)} \mid s_t^{(i)}),
\end{equation}
where $\hat{A}_i$ depends on the group of $R$ rollouts but \textbf{not} on which token position $t$ within rollout $i$ we are at.
\end{theorem}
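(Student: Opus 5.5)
The plan is to compute the per-token advantage directly from the REINFORCE-with-baseline estimator in \cref{eq:reinforce}, specialized to output-only reward with $\gamma=1$. For each rollout $i$ and each position $t$, the coefficient of $\nabla_\theta \log \pi_\theta(a_t^{(i)}\mid s_t^{(i)})$ in that estimator is $\gamma^t(\hat{G}^{(i)}_t - b(s_t^{(i)}))$. So it suffices to show that, under GRPO's choices, both the return and the baseline are independent of $t$.

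\emph{Return term.} With $\gamma=1$, $\gamma^t = 1$ for every $t$. Since $r(s_k,a_k)=0$ for $k<T_i-1$ and $r(s_{T_i-1},a_{T_i-1}) = r_i$, \cref{eq:output_only_return} gives
\[
\hat{G}^{(i)}_t=\sum_{k=t}^{T_i-1} r(s_k^{(i)},a_k^{(i)}) = r_i
\]
for all $t\in\{0,\ldots,T_i-1\}$. The sum always contains the terminal index $T_i-1$, because $t\le T_i-1$, and contains no other nonzero term.

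\emph{Baseline term.} GRPO does not use a state-dependent $b(s_t^{(i)})$. It uses the single prompt-level estimate $\hat{v}_R(s_0)$ from \cref{eq:grpo_baseline}, computed once from the $R$ terminal rewards $r_1,\ldots,r_R$. This quantity is a function of the group only, so it is constant in both $t$ and $i$. Substituting gives $\hat{A}_i(t) = r_i - \hat{v}_R(s_0) = \hat{A}_i$ for every $t$. Then $\hat{g}_i(t)=\hat{A}_i\nabla_\theta\log\pi_\theta(a_t^{(i)}\mid s_t^{(i)})$ follows by reading off the summand of \cref{eq:grpo_gradient}, with the $1/R$ normalization absorbed into the aggregate.

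The only delicate point is the baseline step. The Monte Carlo advantage with the variance-optimal baseline would be $r_i - v^\pi(s_t^{(i)})$, which does depend on $t$. The theorem concerns the GRPO estimator, which replaces every $b(s_t)$ by the $t$-independent $\hat{v}_R(s_0)$, as justified by \cref{eq:uniform_advantage_derived}. I would state this explicitly as the defining choice of the estimator rather than as a property of the true value function. This keeps the claim a direct algebraic consequence of the estimator's definition.

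The baseline remains valid in the sense of \cref{eq:baseline_zero}, up to the usual leave-one-out caveat, because it is constant with respect to the actions $a_t^{(i)}$ conditioned on $s_t^{(i)}$. However, this validity is not needed for the uniformity claim itself.
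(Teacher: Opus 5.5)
Your proposal is correct and follows essentially the same route as the paper. Under output-only reward with $\gamma=1$ the return at every position $t$ collapses to $r_i$, and the group-mean baseline $\hat{v}_R(s_0)$ is computed once and shared across positions, so $\hat{A}_i(t)=r_i-\hat{v}_R(s_0)$ cannot depend on $t$. You also note explicitly that this uniformity comes from GRPO's choice of a prompt-level baseline, not from the true $v^\pi(s_t^{(i)})$ (which does vary with $t$); this is a useful clarification that the paper's proof leaves implicit.
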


\begin{proof}
Under ORM, the return at any position $t$ within rollout $i$ is $G_t^{(i)} = R_{\text{terminal}}^{(i)} = r_i$ (since all intermediate rewards are zero and $\gamma = 1$). The baseline $\hat{v}_R(s_0) = \frac{1}{R}\sum_{j=1}^R r_j$ is computed once from the $R$ group rewards and shared across all positions. Therefore $\hat{A}_i(t) = G_t^{(i)} - \hat{v}_R(s_0) = r_i - \hat{v}_R(s_0) = \hat{A}_i$, independent of $t$.
\end{proof}

\begin{remark}[Credit Assignment Interpretation]
The advantage $\hat{A}_i$ is non-zero---it captures whether rollout $i$ is better or worse than the group average. However, this credit signal cannot distinguish \emph{which tokens} within rollout $i$ were responsible for the outcome. A rollout that succeeds because of a single correct reasoning step in the middle has every token---including irrelevant preamble and filler---reinforced equally. GRPO provides \emph{inter-rollout} credit (which rollouts are good) but no \emph{intra-rollout} credit (which tokens within a rollout are good).
\end{remark}

\subsection{Gradient Factorization}

Since $\hat{A}_i$ is the same for every token in rollout $i$, we can factor it out of the inner sum in \cref{eq:grpo_gradient}. Define the \emph{trajectory score} of rollout $i$ as the sum of all per-token log-probability gradients:
\begin{equation}
\psi_i = \sum_{t=0}^{T_i-1}\nabla_\theta \log \pi_\theta(a_t^{(i)} \mid s_t^{(i)}) \in \R^d.
\end{equation}
This is a single vector that represents the direction in parameter space that would increase the likelihood of the entire sequence $i$. The GRPO gradient then simplifies to:
\begin{equation}
\hat{g}_{\text{GRPO}} = \frac{1}{R}\sum_{i=1}^R \hat{A}_i \cdot \psi_i.
\label{eq:factorization}
\end{equation}
Each rollout contributes one scalar weight ($\hat{A}_i$, how good the rollout was) times one direction ($\psi_i$, how to make it more likely). If GRPO had token-level advantages, we could not factor this way---different tokens would pull in different directions with different magnitudes. The uniform credit is what enables this rank-reducing factorization.

\section{Structural Constraints on Advantages}
\label{sec:sparsity}

The factorized form \cref{eq:factorization} reveals that the GRPO gradient is entirely determined by the advantage vector $(\hat{A}_1, \ldots, \hat{A}_R)$ and the trajectory scores $(\psi_1, \ldots, \psi_R)$. We now characterize structural properties of the advantage vector that constrain the gradient.

\begin{theorem}[Zero-Sum and Correlation Structure]
\label{thm:sparsity}
The GRPO advantages satisfy:
\begin{enumerate}
\item[(i)] \textbf{Zero-sum:} $\sum_{i=1}^R \hat{A}_i = 0$. The advantages within a group always cancel---positive and negative rollouts exactly balance.
\item[(ii)] \textbf{Variance:} $\Var(\hat{A}_i) = \frac{R-1}{R}\sigma_r^2$, where $\sigma_r^2 = \Var_{\pi_\theta}(r \mid s_0)$ is the reward variance under the current policy.
\item[(iii)] \textbf{Negative correlation:} $\Cov(\hat{A}_i, \hat{A}_j) = -\frac{\sigma_r^2}{R}$ for $i \neq j$. Making one rollout's advantage more positive necessarily pushes others more negative.
\item[(iv)] \textbf{$\epsilon$-Effective rollouts:} Define a rollout $i$ as $\epsilon$-effective if $|\hat{A}_i| = |r_i - \hat{v}_R| > \epsilon$. The expected fraction of $\epsilon$-effective rollouts is bounded by:
\begin{equation}
\rho_\epsilon = \Prob\left(|\hat{A}_i| > \epsilon\right) \leq \frac{(R-1)\sigma_r^2}{R\epsilon^2}.
\label{eq:sparsity_bound}
\end{equation}
Only $\epsilon$-effective rollouts contribute meaningful gradient signal; the rest produce near-zero updates.
\end{enumerate}
\end{theorem}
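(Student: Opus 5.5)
The plan is to treat the group rewards $r_1,\ldots,r_R$ as i.i.d.\ draws from the reward distribution of $\pi_\theta(\cdot\mid s_0)$, with mean $\mu_r$ and variance $\sigma_r^2$. This holds because the $R$ rollouts are sampled independently from the same prompt. The key step is to write the advantage vector as a linear map of the reward vector: $\hat{A} = (I - \tfrac{1}{R}\mathbf{1}\mathbf{1}^\top) r$. The matrix $P = I - \tfrac{1}{R}\mathbf{1}\mathbf{1}^\top$ is the centering projection, so every part of the theorem becomes a statement about $P$.

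\textbf{Part (i)} is deterministic. Since $\mathbf{1}^\top P = 0$, we get $\sum_i \hat{A}_i = \sum_i r_i - R\hat{v}_R = 0$ directly from the definition \cref{eq:grpo_baseline}.

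\textbf{Parts (ii) and (iii)} follow from the covariance formula. Because $\Cov(r) = \sigma_r^2 I$, we have $\Cov(\hat{A}) = P(\sigma_r^2 I)P^\top = \sigma_r^2 P$, using that $P$ is symmetric and idempotent. Reading off the entries gives the diagonal $\sigma_r^2(1 - 1/R) = \tfrac{R-1}{R}\sigma_r^2$ and the off-diagonal $-\sigma_r^2/R$.

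Part (iii) can also be checked by hand. Write $\hat{A}_i = (1-\tfrac1R)r_i - \tfrac1R\sum_{k\ne i} r_k$ and expand $\Cov(\hat{A}_i,\hat{A}_j)$. The cross terms give $-\tfrac{2}{R}(1-\tfrac1R)\sigma_r^2 + \tfrac{R-2}{R^2}\sigma_r^2 = -\sigma_r^2/R$. As a consistency check, (i) forces $\Var(\sum_i \hat{A}_i) = 0$, i.e.\ $R\cdot\tfrac{R-1}{R}\sigma_r^2 + R(R-1)c = 0$, which again gives $c = -\sigma_r^2/R$.

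\textbf{Part (iv)} is an application of Chebyshev's inequality. We have $\E[\hat{A}_i] = \mu_r - \mu_r = 0$, so $\E[\hat{A}_i^2] = \Var(\hat{A}_i)$. Chebyshev then gives $\Prob(|\hat{A}_i|>\epsilon) \le \Var(\hat{A}_i)/\epsilon^2 = \tfrac{(R-1)\sigma_r^2}{R\epsilon^2}$, using (ii). By exchangeability of the rollouts, the expected fraction of $\epsilon$-effective rollouts, $\E[\tfrac1R\sum_i \mathbbm{1}\{|\hat{A}_i|>\epsilon\}]$, equals this same per-rollout probability, so the bound transfers to $\rho_\epsilon$.

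The main obstacle is interpretive rather than technical. One has to state clearly that the variance and covariance are taken over the joint sampling of the group with $s_0$ and $\theta$ fixed, and that the i.i.d.\ assumption is what justifies $\Cov(r) = \sigma_r^2 I$. Part (iv) needs this most. The bound is only informative when $\epsilon^2 > \tfrac{R-1}{R}\sigma_r^2$, and "near-zero updates" is a heuristic gloss, not part of the inequality. The proof should prove only the displayed inequality and treat the final sentence of (iv) as interpretation.
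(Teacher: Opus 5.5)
Your proposal is correct and follows essentially the paper's route. You write $\hat A$ as a linear function of i.i.d.\ rewards, compute second moments for (ii), obtain (iii) from $\Var(\sum_i \hat A_i)=0$ (the paper's main argument, your check), and apply Chebyshev for (iv). The centering-matrix form $\Cov(\hat A)=\sigma_r^2 P$ packages (ii)--(iii) into one computation, which avoids the equal-covariance symmetry the paper's step uses implicitly. Your explicit remarks that $\E[\hat A_i]=0$ and that exchangeability justifies the ``expected fraction'' reading supply details the paper leaves unstated.
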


\begin{proof}
(i) $\sum_i \hat{A}_i = \sum_i (r_i - \hat{v}_R) = \sum_i r_i - R \cdot \frac{1}{R}\sum_i r_i = 0$.

(ii) Write $\hat{A}_i = r_i - \frac{1}{R}\sum_j r_j = \frac{R-1}{R}r_i - \frac{1}{R}\sum_{j\neq i}r_j$. Since the $r_j$ are i.i.d.:
$\Var(\hat{A}_i) = \left(\frac{R-1}{R}\right)^2\sigma_r^2 + \frac{R-1}{R^2}\sigma_r^2 = \frac{R-1}{R}\sigma_r^2$.

(iii) From $\Var(\sum_i \hat{A}_i) = 0$: $R\Var(\hat{A}_i) + R(R-1)\Cov(\hat{A}_i, \hat{A}_j) = 0$, giving $\Cov(\hat{A}_i, \hat{A}_j) = -\sigma_r^2/R$.

(iv) By Chebyshev's inequality: $\Prob(|\hat{A}_i| > \epsilon) \leq \Var(\hat{A}_i)/\epsilon^2 = \frac{R-1}{R}\sigma_r^2 / \epsilon^2$.

\end{proof}

\begin{corollary}[Gradient Signal Decay]
\label{cor:progressive}
The magnitude of the gradient $\|\hat{g}_{\text{GRPO}}\|$ scales with $\sigma_r$---the reward standard deviation under the current policy. As training progresses and the policy converges (i.e., it gets most problems right or most wrong for a given prompt), $\sigma_r \to 0$ and the advantages $\hat{A}_i \to 0$ for all $i$ simultaneously. When all rollouts from a prompt receive the same reward, $\hat{A}_i = 0$ exactly and the prompt contributes zero gradient.
\end{corollary}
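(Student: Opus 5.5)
The plan is to work directly from the factorized form \cref{eq:factorization}, $\hat{g}_{\text{GRPO}} = \frac{1}{R}\sum_i \hat{A}_i \psi_i$, and to bound its norm by a quantity proportional to the spread of the group rewards. By the triangle inequality and Cauchy--Schwarz,
\begin{equation*}
\|\hat{g}_{\text{GRPO}}\| \le \frac{1}{R}\sum_{i=1}^R |\hat{A}_i|\,\|\psi_i\| \le \Big(\max_i \|\psi_i\|\Big)\Big(\frac{1}{R}\sum_{i=1}^R \hat{A}_i^2\Big)^{1/2}.
\end{equation*}
The last factor is exactly the empirical standard deviation $\hat{\sigma}_r$ of the group rewards, since $\hat{A}_i = r_i - \hat{v}_R$. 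Taking expectations and using Jensen with \cref{thm:sparsity}(ii), we get $\E[\frac{1}{R}\sum_i \hat{A}_i^2] = \frac{R-1}{R}\sigma_r^2$. Hence $\E\|\hat{g}_{\text{GRPO}}\| \le C\sqrt{(R-1)/R}\,\sigma_r$, where $C$ bounds the trajectory scores. This is the precise sense in which the gradient magnitude ``scales with $\sigma_r$''.

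To justify the constant $C$, I would assume bounded sequence lengths $T_i \le T_{\max}$ and bounded per-token score norms $\|\nabla_\theta \log\pi_\theta\| \le L$. Both are natural for softmax policies with bounded logits' Jacobian. They give $\|\psi_i\| \le L T_{\max}$. If one prefers to avoid a sup bound, the same argument in expectation works via Cauchy--Schwarz on $\E[\hat{A}_i^2\|\psi_i\|^2]$, but then the dependence between $\hat{A}_i$ and $\psi_i$ must be handled. I would state the bounded-score version as the clean route.

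For the limiting claims, there are two. First, as $\sigma_r \to 0$, \cref{thm:sparsity}(ii) gives $\E[\hat{A}_i^2] \to 0$ for every $i$, so each $\hat{A}_i \to 0$ in $L^2$, and hence in probability, simultaneously. Chebyshev as in \cref{thm:sparsity}(iv) makes this quantitative: $\rho_\epsilon \to 0$ for every fixed $\epsilon$. Second, for binary rewards $\sigma_r^2 = p(1-p)$, which vanishes exactly when the success probability $p \to 0$ or $1$. This matches the ``mostly right or mostly wrong'' interpretation. The exact statement is deterministic: if $r_1 = \dots = r_R$, then $\hat{v}_R = r_i$ for all $i$, so every $\hat{A}_i = 0$ and \cref{eq:factorization} gives $\hat{g}_{\text{GRPO}} = 0$ regardless of the $\psi_i$.

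The main obstacle is making ``scales with $\sigma_r$'' rigorous rather than heuristic. An upper bound is straightforward, but a matching lower bound would need non-degeneracy of the $\psi_i$, which the zero-sum constraint can cancel. So I would present the corollary as an upper bound $O(\sigma_r)$ under the bounded-score assumption, plus the exact vanishing case, and note that a two-sided scaling requires additional assumptions on the trajectory scores.
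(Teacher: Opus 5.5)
Your proposal is correct and follows the same line the paper relies on: the corollary has no separate written proof and is meant to be read off from the linearity of \cref{eq:factorization} in the advantages, the variance formula in \cref{thm:sparsity}(ii), and the fact that equal rewards force every $\hat{A}_i = 0$. Your argument is the rigorous version of that reasoning, and it makes explicit two things the paper leaves unstated: the bound $\|\hat{g}_{\text{GRPO}}\| \le (\max_i \|\psi_i\|)\,\hat{\sigma}_r$ combined with Jensen needs a bounded-score assumption, and without non-degeneracy assumptions on the $\psi_i$ the claim that the gradient ``scales with $\sigma_r$'' can only be proved as an $O(\sigma_r)$ upper bound, as you note.
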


The zero-sum constraint means GRPO can only learn from prompts where the $R$ rollouts produce \emph{diverse} rewards---some correct, some incorrect. Prompts that are too easy (all correct) or too hard (all wrong) yield $\hat{A}_i = 0$ for every rollout and contribute nothing to learning. This makes GRPO a natural \emph{curriculum}: it automatically focuses on prompts at the boundary of the policy's capability, where reward variance is highest.

\section{Sparse Gradient Updates and Rank Collapse}
\label{sec:rank}

We now discuss how the GRPO gradient structure encourages sparse parameter updates and empirically leads to low-rank gradients.

\subsection{Why Updates Are Sparse}

From \cref{eq:factorization}, the gradient is $\hat{g} = \frac{1}{R}\sum_i \hat{A}_i \psi_i$. Two structural properties conspire to make this sparse:

\textbf{(1) Zero-sum advantages concentrate signal.} Since $\sum_i \hat{A}_i = 0$, the gradient is determined entirely by the \emph{contrast} between rollouts. Consider binary reward ($r \in \{0, 1\}$) as in GSM8K: if $k$ of $R$ rollouts are correct, then the correct rollouts have $\hat{A}_i = 1 - k/R$ and incorrect ones have $\hat{A}_i = -k/R$. When $k$ is close to $0$ or $R$ (easy/hard prompts), the advantages become small and the gradient vanishes. Only prompts where some rollouts succeed and some fail produce meaningful signal.

\textbf{(2) Uniform credit concentrates in few directions.} Because $\hat{A}_i$ is a single scalar for all tokens in rollout $i$, the gradient for each rollout points in one direction $\psi_i$. If trajectory scores are similar (e.g., rollouts from the same prompt share syntactic structure), the $R$ directions $\psi_i$ cluster. We hypothesize that this clustering, combined with the zero-sum advantages, is what empirically concentrates the gradient into a low-dimensional (often rank-2) subspace.

\section{Multi-Turn Limitation}
\label{sec:multiturn}

The derivation in \cref{sec:derivation} reveals why GRPO is inherently a single-turn method. Recall the standard advantage at state $s_t$:
\begin{equation}
A^\pi(s_t, a_t) = q^\pi(s_t, a_t) - v^\pi(s_t) = r_{t+1} + \gamma v^\pi(s_{t+1}) - v^\pi(s_t).
\label{eq:true_advantage}
\end{equation}
This requires bootstrapping from the \emph{next state's value} $v^\pi(s_{t+1})$---which is what a learned critic provides in PPO.

GRPO has no critic. It uses the group-mean baseline $\hat{v}_R(s_0)$, which estimates the value of the \emph{initial state} $s_0$ only. In the single-turn ORM setting this is fine: with $\gamma = 1$ and $r_t = 0$ for $t < T-1$, the advantage at every position reduces to $R_{\text{terminal}} - v^\pi(s_0)$, and $\hat{v}_R(s_0)$ is an unbiased estimate of $v^\pi(s_0)$.

But in multi-turn settings with per-step rewards $r_1, \ldots, r_H$, the true advantage at stage $h$ requires $v^\pi(s_h)$---the value of the \emph{current} state, not the initial state. GRPO still subtracts $\hat{v}_R(s_0)$ from the total return:
\begin{equation}
\hat{A}_{\text{GRPO}}(h) = \sum_{h'=1}^H r_{h'} - \hat{v}_R(s_0).
\end{equation}
This conflates past rewards (already received, not actionable) with future rewards (what the agent can still influence), introducing bias:
\begin{equation}
\Bias_h = \sum_{h'=1}^{h-1} r_{h'} + v^\pi(s_h) - v^\pi(s_0),
\end{equation}
which grows linearly with stage $h$. The first term is accumulated past reward; the second is the value mismatch from using $v^\pi(s_0)$ instead of $v^\pi(s_h)$.

\textbf{In short:} GRPO cannot perform the bootstrapping step $\gamma v^\pi(s_{t+1})$ because it has no value function. This makes it exact for single-turn generation (where bootstrapping is unnecessary) but introduces growing bias for multi-turn problems where the value of states changes along the trajectory.

\section{Experiments}
\label{sec:experiments}

We validate our theoretical predictions empirically using GRPO training of Nemotron-4B on GSM8K.

\subsection{Setup}

We train Nemotron-4B \cite{nvidia2024nemotron} with GRPO on GSM8K (mathematical reasoning, binary correctness reward). We vary the group size $R \in \{2, 4, 8\}$ across 3 random seeds. Using a custom gradient SVD tracker integrated into our FSDP training pipeline, we explicitly capture the fully reduced flattened gradients ($\nabla_\theta J$) for tracked linear layers (e.g., \texttt{q\_proj}, \texttt{v\_proj}) every 50 optimizer steps. For each tracked layer at each checkpoint, we stack the per-rollout gradient contributions into a matrix $G \in \R^{R \times d}$ and compute its truncated SVD.

\textbf{Metrics:}
\begin{itemize}
\item \textbf{Effective rank}: $r_{\text{eff}} = \left(\sum_i \sigma_i\right)^2 / \sum_i \sigma_i^2$
\item \textbf{Top-1 fraction}: $\sigma_1^2 / \sum_i \sigma_i^2$ (variance explained by first component)
\item \textbf{Reward std}: $\sigma_r(k)$ at training step $k$
\end{itemize}

\subsection{Results}

\begin{figure}[t]
\centering
\includegraphics[width=\columnwidth]{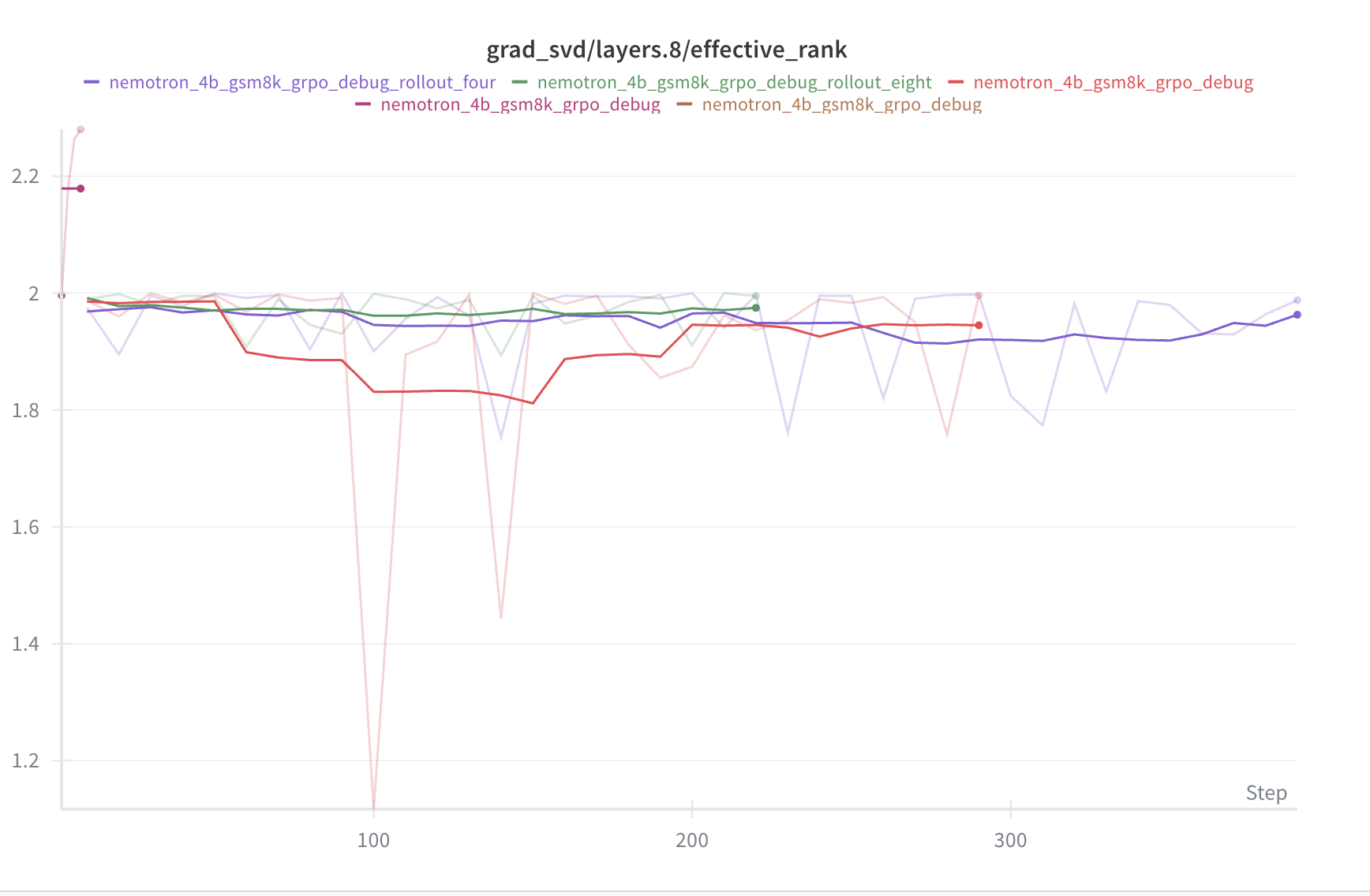}
\caption{Effective rank of the GRPO gradient matrix across training steps for $R \in \{2, 4, 8\}$. Despite the maximum possible rank being higher, effective rank remains $\approx 2$ throughout training.}
\label{fig:effective_rank}
\end{figure}

\begin{figure}[t]
\centering
\includegraphics[width=\columnwidth]{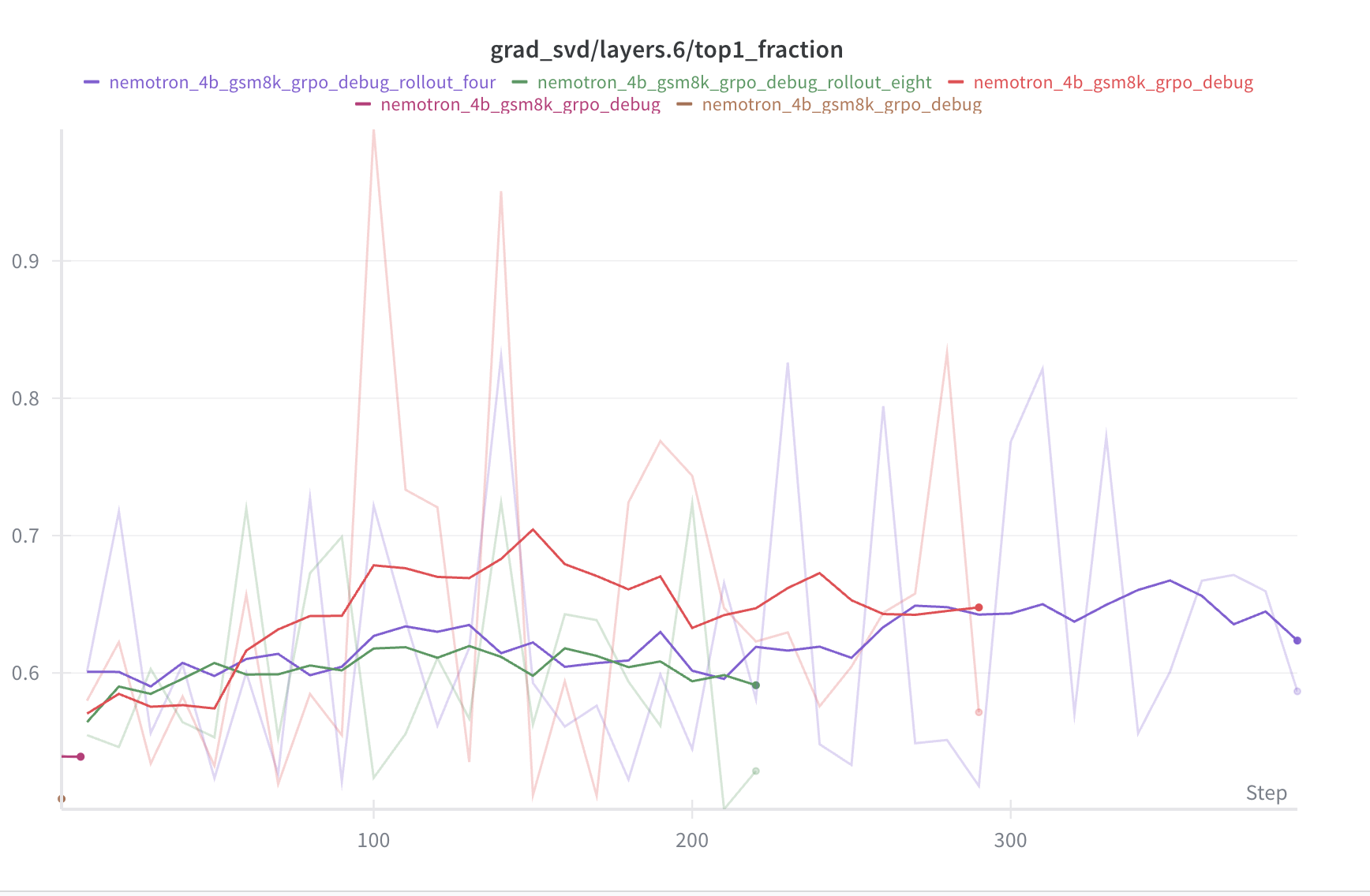}
\caption{Top-1 singular value fraction ($\sigma_1^2 / \sum_i \sigma_i^2$) across training. $R=4$ shows the highest concentration, while $R=8$ diffuses energy across more components. This non-monotonic pattern is explained by the interplay between advantage concentration and trajectory score diversity.}
\label{fig:top1_fraction}
\end{figure}

\begin{figure}[t]
\centering
\includegraphics[width=\columnwidth]{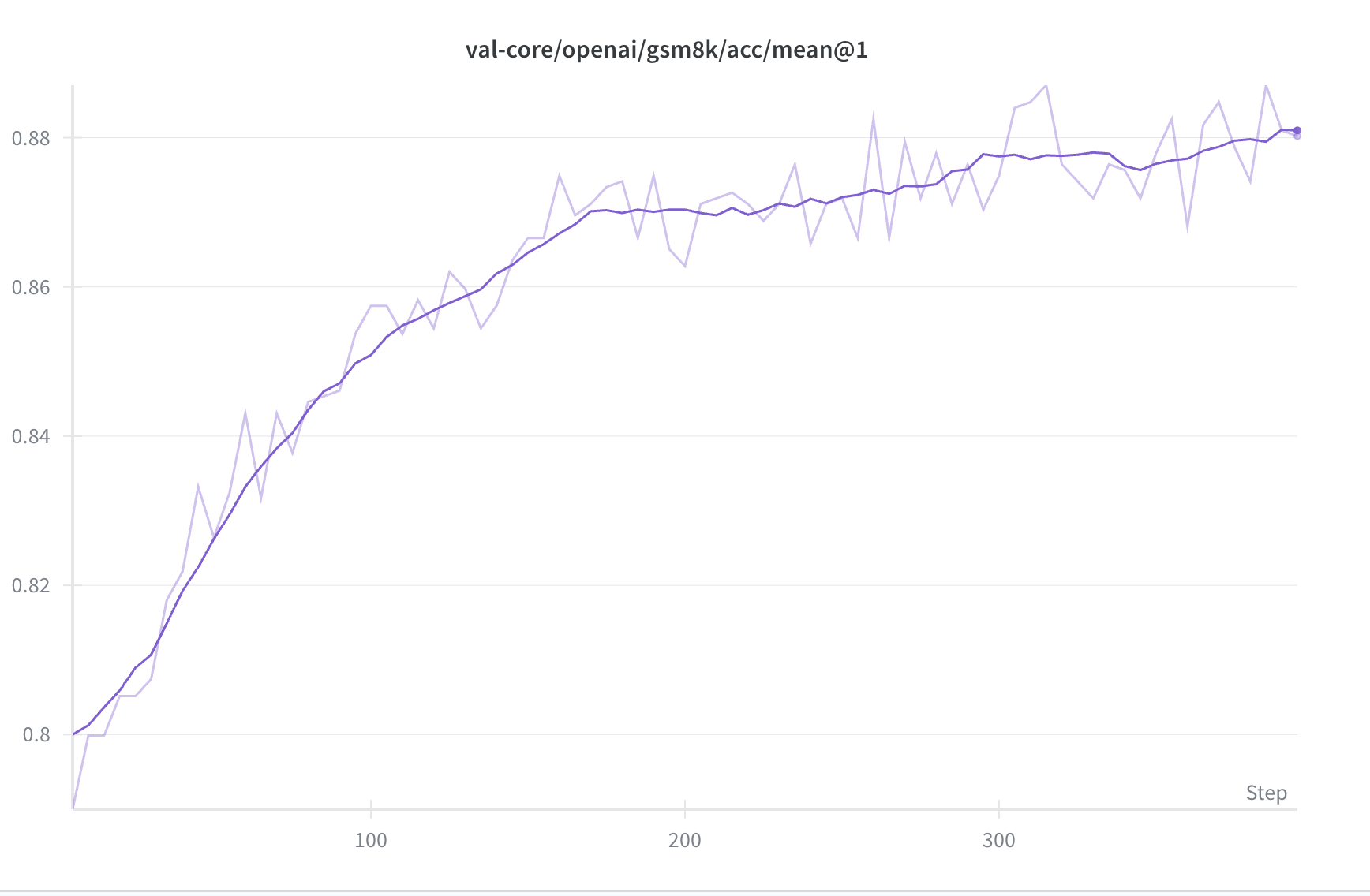}
\caption{Training accuracy on GSM8K for different group sizes. Despite rank-2 gradient structure for all $R$, larger groups still improve learning speed by providing better estimates of the advantage direction $\bar{\psi}_+ - \bar{\psi}_-$.}
\label{fig:accuracy}
\end{figure}

\paragraph{Effective rank $\approx 2$ for all $R$.}
Across all group sizes, the effective rank of the gradient matrix remains approximately $2$ (\cref{tab:results}). This holds despite the theoretical maximum rank being larger for groups $R \geq 2$.

\begin{table}[t]
\caption{Spectral properties of GRPO gradient matrices (Nemotron-4B, GSM8K). Effective rank stays $\approx 2$ while max possible rank grows with $R$.}
\label{tab:results}
\begin{center}
\begin{small}
\begin{sc}
\begin{tabular}{lcccc}
\toprule
$R$ & Max Rank & Eff. Rank & Top-1 Frac. \\
\midrule
2 & 2\textsuperscript{*} & $\sim$2 & 0.6--0.7 \\
4 & 3 & $\sim$2 & 0.7--0.8 \\
8 & 7 & $\sim$2 & 0.5--0.6 \\
\bottomrule
\end{tabular}
\end{sc}
\end{small}
\end{center}
\raggedright
\scriptsize{\textsuperscript{*}For $R=2$, while advantages are zero-sum ($\hat{A}_1 = -\hat{A}_2$), the trajectory scores $\psi_1$ and $\psi_2$ are linearly independent, making the maximum possible rank 2.}
\vskip -0.1in
\end{table}

\paragraph{Interpretation.}
The rank-2 structure means that \textbf{each GRPO gradient step moves parameters along only $\sim$2 directions}, regardless of whether 2 or 8 rollouts are computed. The additional rollouts for $R > 2$ refine the \emph{magnitude} and \emph{direction} of these two components but do not add new dimensions of movement. This has direct implications for training efficiency: increasing $R$ provides diminishing returns in terms of gradient informativeness.

\section{Discussion and Future Work}
\label{sec:discussion}

\paragraph{Connection to memorization vs.\ generalization.}
The uniform credit assignment (\cref{thm:uniform_credit}) means GRPO reinforces entire sequences indiscriminately. From the memorization/generalization perspective, this biases toward \emph{template-level learning}: sequences that consistently achieve high reward are reinforced as holistic units, rather than having their compositional structure decomposed. This may explain why GRPO-trained models can reproduce successful reasoning patterns but struggle to compose them in novel ways.

\paragraph{Implications for multi-step reasoning.}
The multi-turn bias  and rank-2 collapse together suggest that GRPO is poorly suited for learning \emph{which step} in a reasoning chain matters. Process reward models (PRMs) that provide per-step credit are a natural remedy, but our analysis shows that even with dense reward, the group-relative baseline introduces structural limitations (negative correlations, zero-sum constraints) that constrain the gradient's expressiveness.

\paragraph{Future directions.}
(1) \emph{Adaptive baselines}: Can leave-one-out (RLOO) or learned token-level baselines break the rank-2 bottleneck?
(2) \emph{Rank-aware training}: If effective rank predicts training saturation, can we use it as a signal to increase $R$ or switch to denser rewards?
(3) \emph{Progressive rank collapse over training}: Our theory predicts effective rank should decrease as $\sigma_r \to 0$; longitudinal measurement would validate \cref{cor:progressive}.

\section{Related Work}
\label{sec:related}

\textbf{Policy gradients and baselines.} The policy gradient theorem \cite{sutton1999policy}, REINFORCE \cite{williams1992simple}, and optimal baselines \cite{weaver2001optimal} provide the foundation. GAE \cite{schulman2015high} offers a bias-variance tradeoff via temporal-difference mixing.

\textbf{GRPO and RLHF.} GRPO was introduced in DeepSeek-Math \cite{shao2024deepseekmath} and used in DeepSeek-R1 \cite{deepseekr1}. RLOO \cite{ahmadian2024back} uses a leave-one-out baseline that eliminates the $O(1/R)$ bias. DPO \cite{rafailov2023direct} avoids RL entirely.

\textbf{Credit assignment in LLMs.} Process reward models \cite{lightman2023lets} address token-level credit. Recent work on advantage-weighted regression and return-conditioned training explores alternative credit mechanisms.

\textbf{Gradient rank in deep learning.} Low-rank gradient structure has been observed in supervised learning \cite{li2018measuring} and exploited in efficient fine-tuning (LoRA, \citealp{hu2021lora}). Our work identifies a \emph{theoretical} reason for low rank specific to GRPO's policy gradient structure.

\section{Conclusion}

We have shown that GRPO, derived from first principles of the policy gradient theorem, suffers from a fundamental credit assignment limitation: output-only reward forces identical advantage across all tokens, collapsing the gradient into a rank-2 structure independent of group size. This is validated empirically on Nemotron-4B/GSM8K. Our results formalize when GRPO's simplicity is theoretically justified (single-turn, high initial $\sigma_r$) and where it breaks down (multi-step reasoning requiring fine-grained credit). Breaking the rank-2 bottleneck---via denser rewards, token-level baselines, or hybrid approaches---is an important direction for scaling RLHF to complex reasoning tasks.

\bibliography{workshop_refs}
\bibliographystyle{icml2026_fogen}

\newpage
\appendix
\onecolumn
\section{Proofs}
\label{app:proofs}

\subsection{Proof of \cref{thm:sparsity} (Full Details)}

\textbf{Zero-sum:} $\sum_{i=1}^R \hat{A}_i = \sum_i (r_i - \hat{v}_R) = \sum_i r_i - R\hat{v}_R = \sum_i r_i - \sum_i r_i = 0.$

\textbf{Variance:} Write $\hat{A}_i = r_i - \frac{1}{R}\sum_j r_j = \frac{R-1}{R}r_i - \frac{1}{R}\sum_{j\neq i}r_j$. Then:
\begin{align}
\Var(\hat{A}_i) &= \left(\frac{R-1}{R}\right)^2\Var(r_i) + \frac{1}{R^2}\sum_{j\neq i}\Var(r_j) \\
&= \frac{(R-1)^2}{R^2}\sigma_r^2 + \frac{R-1}{R^2}\sigma_r^2 = \frac{(R-1)^2 + (R-1)}{R^2}\sigma_r^2 = \frac{(R-1)R}{R^2}\sigma_r^2 = \frac{R-1}{R}\sigma_r^2.
\end{align}

\textbf{Covariance:} From $\Var(\sum_i \hat{A}_i) = 0$:
$$R\Var(\hat{A}_i) + R(R-1)\Cov(\hat{A}_i, \hat{A}_j) = 0 \implies \Cov(\hat{A}_i, \hat{A}_j) = -\frac{\Var(\hat{A}_i)}{R-1} = -\frac{\sigma_r^2}{R}.$$

\textbf{Sparsity bound:} $\Var(r_i - \hat{v}_R) = \Var(\hat{A}_i) = \frac{R-1}{R}\sigma_r^2$. By Chebyshev: $\Prob(|\hat{A}_i| > \epsilon) \leq \frac{(R-1)\sigma_r^2}{R\epsilon^2}$.

\subsection{Hoeffding Inequality Analysis}

The baseline $\hat{v}_R(s_0) = \frac{1}{R}\sum_{i=1}^R r_i$ is the arithmetic mean of $R$ i.i.d.\ random variables with $r_i \in [0,1]$ and $\E[r_i] = v^\pi(s_0)$.

By Hoeffding's inequality for bounded i.i.d.\ random variables:
$$\Prob\left(\left|\frac{1}{R}\sum_{i=1}^R r_i - \E[r_i]\right| > \epsilon\right) \leq 2\exp\left(\frac{-2R^2\epsilon^2}{\sum_{i=1}^R (b_i - a_i)^2}\right) = 2\exp\left(\frac{-2R^2\epsilon^2}{R \cdot 1^2}\right) = 2\exp(-2R\epsilon^2).$$

\textbf{Interpretation by group size:}
\begin{center}
\begin{tabular}{lccc}
\toprule
$R$ & $\Prob(|\hat{v}_R - v^\pi| > 0.3)$ & $\Prob(|\hat{v}_R - v^\pi| > 0.2)$ & $\Prob(|\hat{v}_R - v^\pi| > 0.1)$ \\
\midrule
2 & $\leq 1.34$ (vacuous) & $\leq 1.70$ (vacuous) & $\leq 1.92$ (vacuous) \\
4 & $\leq 0.93$ (vacuous) & $\leq 1.44$ (vacuous) & $\leq 1.85$ (vacuous) \\
8 & $\leq 0.44$ & $\leq 1.05$ (vacuous) & $\leq 1.71$ (vacuous) \\
16 & $\leq 0.10$ & $\leq 0.55$ & $\leq 1.47$ (vacuous) \\
\bottomrule
\end{tabular}
\end{center}

The bound becomes non-trivial only for larger $R$ or larger $\epsilon$. This reflects the fact that with few rollouts ($R = 2,4$), the baseline has substantial variance ($\sigma_r^2/R$), and the advantages are noisy estimates of the true $r_i - v^\pi(s_0)$.

\textbf{Connection to advantage quality:} The advantage error for rollout $i$ is:
$$\hat{A}_i - (r_i - v^\pi(s_0)) = v^\pi(s_0) - \hat{v}_R(s_0).$$
So this directly bounds the advantage estimation error: with probability $\geq 1 - 2e^{-2R\epsilon^2}$, every rollout's advantage is within $\epsilon$ of its true value $r_i - v^\pi(s_0)$.

\subsection{Multi-Turn Bias Derivation}

In multi-turn with $H$ stages, the true advantage at stage $h$ is:
$$A^\pi(s_h, a_h) = \E_\pi\left[\sum_{h'=h}^H r_{h'} \mid s_h, a_h\right] - v^\pi(s_h).$$

GRPO, lacking a value function for intermediate states, uses the same estimate for all stages:
$$\hat{A}_{\text{GRPO}}(h) = \underbrace{\sum_{h'=1}^H r_{h'}}_{\text{total return}} - \underbrace{\hat{v}_R(s_0)}_{\text{baseline from initial state}}.$$

The key issue: GRPO subtracts $\hat{v}_R(s_0) \approx v^\pi(s_0)$ when it should subtract $v^\pi(s_h)$. Taking expectations conditioned on $(s_h, a_h)$:
\begin{align}
\E[\hat{A}_{\text{GRPO}}(h) \mid s_h, a_h] &= \underbrace{\sum_{h'=1}^{h-1}r_{h'}}_{\text{past rewards (fixed)}} + \underbrace{\E\left[\sum_{h'=h}^H r_{h'} \mid s_h, a_h\right]}_{= q^\pi(s_h, a_h)} - v^\pi(s_0) \\
&= \sum_{h'=1}^{h-1}r_{h'} + A^\pi(s_h,a_h) + v^\pi(s_h) - v^\pi(s_0).
\end{align}

Therefore:
$$\Bias_h = \E[\hat{A}_{\text{GRPO}}(h)] - A^\pi(s_h, a_h) = \underbrace{\sum_{h'=1}^{h-1}r_{h'}}_{\text{past reward leakage}} + \underbrace{v^\pi(s_h) - v^\pi(s_0)}_{\text{value mismatch}}.$$

The bias has two sources:
\begin{enumerate}
\item \textbf{Past reward leakage}: Rewards from stages $1, \ldots, h-1$ are included in the return but are not attributable to action $a_h$. This grows as $h$ increases.
\item \textbf{Value mismatch}: GRPO bootstraps from $v^\pi(s_0)$ instead of $v^\pi(s_h)$. In the standard advantage (\cref{eq:true_advantage}), we subtract $v^\pi(s_t)$---the value of the current state. Without a critic, GRPO cannot compute $v^\pi(s_h)$ for $h > 0$.
\end{enumerate}

Bounding: $|\Bias_h| \leq (h-1)r_{\max} + |v^\pi(s_h) - v^\pi(s_0)| \leq (h-1)r_{\max} + 2V_{\max}$.

For single-turn ($H = 1$): $\Bias_1 = 0$ (no past rewards, no value mismatch). GRPO is exact.

\end{document}